\definecolor{kellygreen}{rgb}{0.3, 0.73, 0.09}
\newcommand{\cmark}{\textcolor{kellygreen}{\ding{51}}}%
\newcommand{\xmark}{\textcolor{red}{\ding{55}}}%
\definecolor{colorA}{rgb}{0.5, 0.0, 1.0}
\definecolor{colorB}{rgb}{0.2490, 0.3841, 0.9806}
\definecolor{colorC}{rgb}{0.0019, 0.7092, 0.9232}
\definecolor{colorD}{rgb}{0.2529, 0.9256, 0.8301}
\definecolor{colorE}{rgb}{0.5039, 0.9999, 0.7049}
\definecolor{colorF}{rgb}{0.7549, 0.9209, 0.5523}
\definecolor{colorG}{rgb}{1.0, 0.7005, 0.37841}
\definecolor{colorH}{rgb}{1.0, 0.0, 0.0}
\definecolor{colorlight}{rgb}{0.67, 0.88, 0.69}
\newcommand{\comment}[1]{}
\newcolumntype{C}[1]{>{\centering}p{#1}}
\begin{document}
\title{Unsupervised Features Ranking via Coalitional Game Theory for Categorical Data\thanks{This research was supported by the research training group \emph{Dataninja} (Trustworthy AI for Seamless Problem Solving: Next Generation Intelligence Joins Robust Data Analysis) funded by the German federal state of North Rhine-Westphalia.}}
%
%
\author{
Chiara Balestra \inst{1}
\and Florian Huber\inst{3}
\and Andreas Mayr\inst{2}
\and Emmanuel Müller\inst{1}}
\authorrunning{Balestra et al.
}
%
\institute{TU Dortmund, Germany \and
Department of Medical Biometry, Informatics 
and Epidemiology,\\ University Hospital of Bonn, Germany\and
University of Bonn, Germany}
\maketitle 

\begin{abstract}
Not all real-world data are labeled, and when labels are not available, it is often costly to obtain them. Moreover, as many algorithms suffer from the curse of dimensionality, reducing the features in the data to a smaller set is often of great utility. Unsupervised feature selection aims to reduce the number of features, often using feature importance scores to quantify the relevancy of single features to the task at hand. These scores can be based only on the distribution of variables and the quantification of their interactions. The previous literature, mainly investigating anomaly detection and clusters, fails to address the redundancy-elimination issue. We propose an evaluation of correlations among features to compute feature importance scores representing the contribution of single features in explaining the dataset's structure.

Based on Coalitional Game Theory, our feature importance scores include a notion of redundancy awareness making them a tool to achieve redundancy-free feature selection. We show that the deriving features' selection outperforms competing methods in lowering the redundancy rate while maximizing the information contained in the data. We also introduce an approximated version of the algorithm to reduce the complexity of Shapley values' computations.

\keywords{feature ranking \and game theory \and redundancy reduction }
\end{abstract}


\section{Introduction} \label{intro}
In machine learning, both feature selection methods and reduction of dimensionality are often performed to increase interpretability and to reduce computational complexity. As an example, for unsupervised applications such as clustering~\cite{cheng} or anomaly detection~\cite{cmi}, the curse of dimensionality poses a major challenge. Unsupervised feature selection enables the detection of data patterns, as well as the description of these patterns using a concise set of relevant features~\cite{unsup_fs,ufs}. The corresponding methods are mostly based on the analysis of multivariate data distributions, pairwise correlations, higher-order interactions among features, or pseudo-labels. The use of such complex measures implies that both the selection as well as the interpretation of why some features have been selected is challenging. On the one hand, selection requires basic measures to quantify the interaction within a set of features~\cite{cheng,cmi,mic}. On the other hand, interpretation of higher-order interactions is non-straight-forward and requires the decomposition of complex non-linear, higher-order, and multivariate measures to feature importance scores. \par
\begin{figure}[!t]
\subcaptionbox{\label{figurA}}[0.65\linewidth]{\includegraphics[height=3cm]{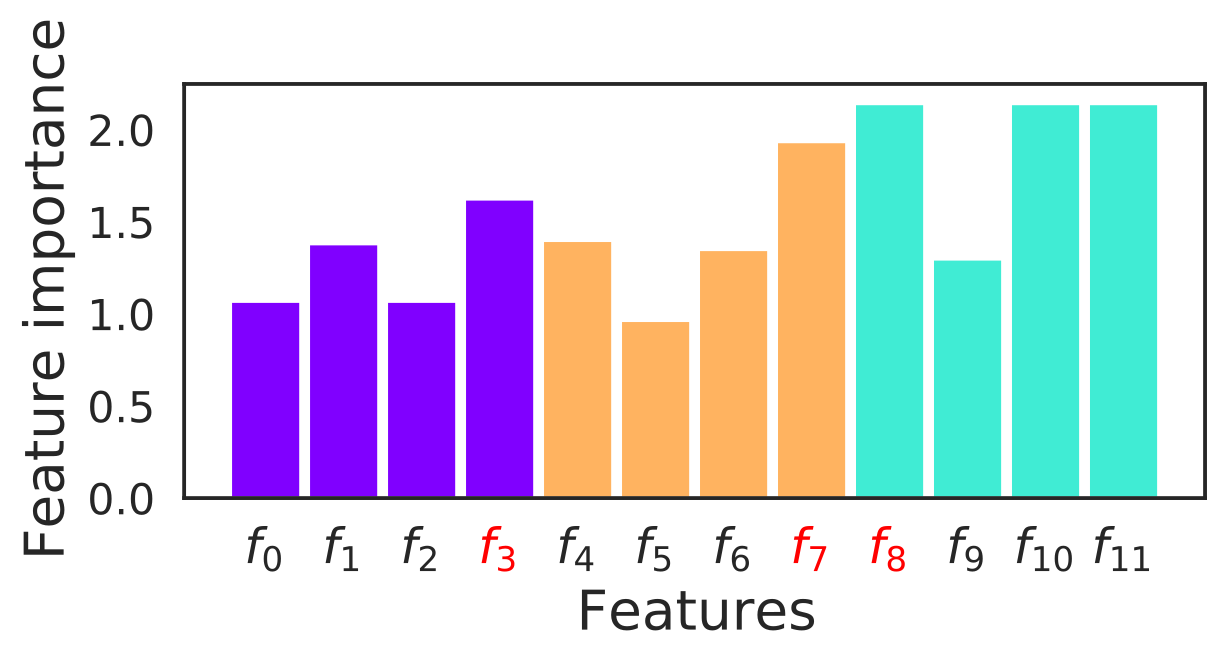}}
\subcaptionbox{\label{figurB}}[0.33\linewidth]{\includegraphics[height=3cm]{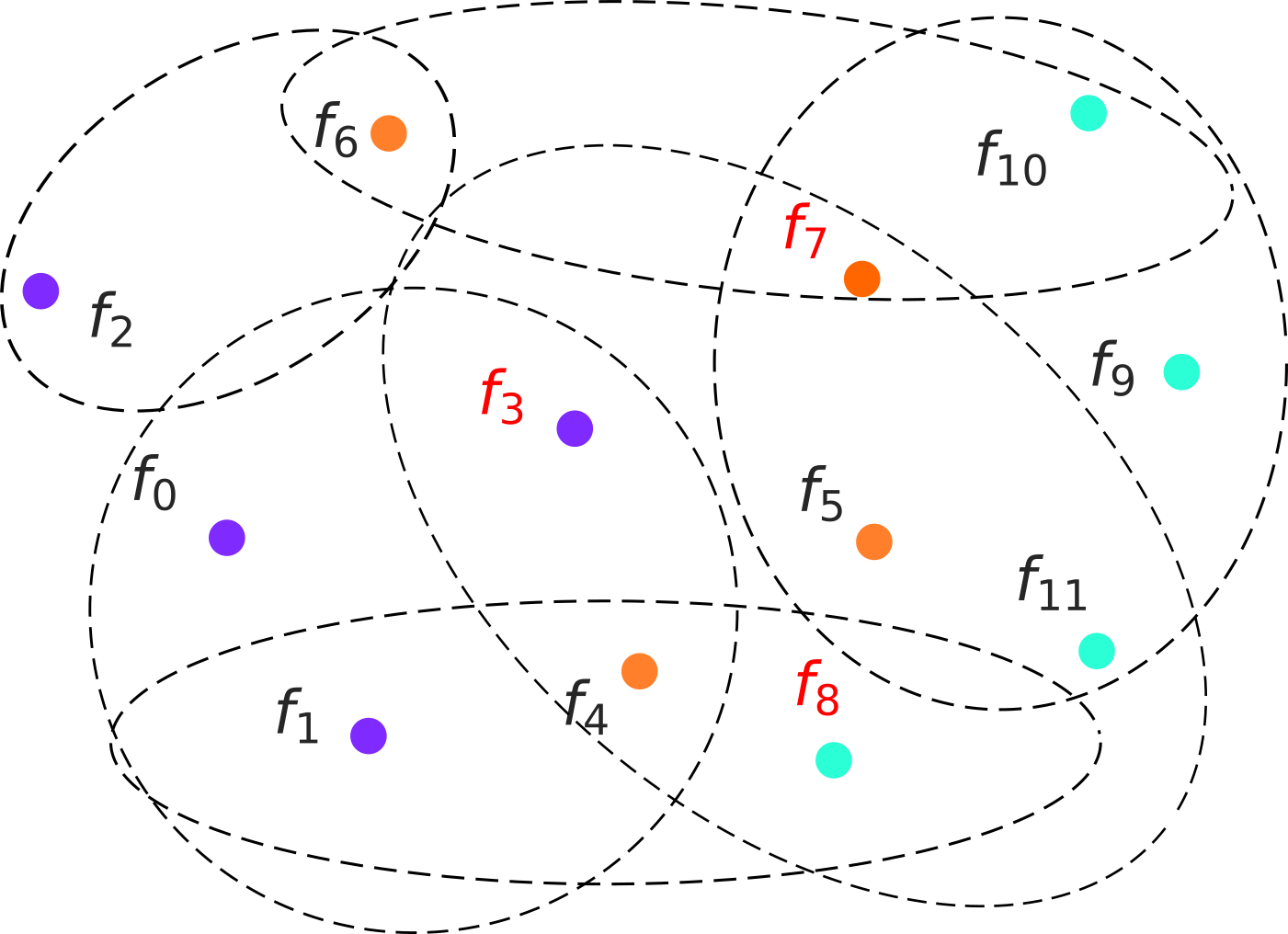}}
\caption{\label{fig:ranking_synth}(a) Unsupervised Shapley values-based feature importance scores. (b) Shapley values consider interactions within all possible subsets of features $f_i$s. In both figures, correlated subsets of features are color-coded and features selected by the proposed algorithm are marked in red.}
\vspace{-0.65cm}
\end{figure}
In different application domains, raising understanding over the mechanisms of underlying machine learning techniques has become a crescent necessity. Assigning scores to features based on their contributions to the machine learning procedure plays a decisive role to this end. Feature importance scores are prevalent in supervised learning, e.g., random forests. At the same time, for unsupervised tasks, the literature is limited either to traditional scores~\cite{unsup_fs,ufs} not sensitive to higher-order interactions, or the scores are not easily interpretable higher-order correlation measures.
\par
We propose new unsupervised feature importance scores decomposing the information contained in the data using axiomatic game-theoretic properties. In particular, Shapley values enable us to consider the interactions present in each possible subset of features (Figure~\ref{fig:ranking_synth}(b)) and to assign importance scores to the single features accordingly. Our approach consists of two steps. In the first step, we introduce a game-theoretic solution to decompose the information contained in the dataset and assign importance scores to the single features. The scores obtained consider complex higher-order feature interactions, can be based on different correlation measures, and do not rely on specific notions of clustering or anomalies. In particular, we use Shapley values~\cite{shapley} to get the feature importance scores where features explaining the most information on the overall dataset obtain a higher score. In the second step, we take care of a mechanism to reduce the redundancy among features. To this end, feature importance scores are penalized through an information-theoretic measure of correlation to yield a redundancy-free feature selection. Figure~\ref{fig:ranking_synth}(a) displays how correlated features are ranked similarly before applying the redundancy elimination step and how our method is capable of avoiding selecting highly correlated features. In the experimental results, we show that our ranking is achieving a redundancy free-ranking; the redundancy rate of the selected features is kept low both in synthetic and real datasets. \par
As a final remark, the scores flexibly rely on different correlation measures and are not bound to any clustering or anomaly detection goals. We choose to present the results obtained using the total correlation; hence, the presented experimental results are limited to discrete and categorical data. The procedure can be extended to mixed datasets replacing the total correlation or applying discretization on continuous data.
\begin{table*}[!t]
	\centering
	\begin{tabularx}{\textwidth} { >{\centering\arraybackslash}X |>{\centering\arraybackslash}X >{\centering\arraybackslash}X >{\centering\arraybackslash}X >{\centering\arraybackslash}X >{\centering\arraybackslash}X >{\centering\arraybackslash}X >{\centering\arraybackslash}X }
		& versatile quality notion & feature ordering & iterative selection & redundancy awareness & higher-order interactions
		\\\hline
		UDFS~\cite{UDFS}& \xmark & \cmark & \xmark & \cmark &\cmark\\
		MCFS~\cite{MCFS} & \xmark & \cmark & \xmark & \cmark & \xmark\\
		NDFS~\cite{NDFS} & \xmark & \cmark & \xmark & \cmark & \xmark\\
		SPEC~\cite{SPEC} & \cmark & \cmark & \xmark & \xmark & \xmark\\
		LS~\cite{LS} & \cmark & \cmark & \xmark & \xmark & \xmark \\
		PFA~\cite{PFA} & \cmark & \xmark & \xmark & \cmark & \xmark\\
		FSFC~\cite{FSFC} & \xmark & \xmark & \cmark & \cmark &\xmark\\
		\textbf{\footnotesize{this paper}}& \cmark & \cmark & \cmark & \cmark & \cmark \\
	\end{tabularx}
	\vspace*{3mm}
	\caption{Summary table of the competing methods and this paper.}
	\label{tab:comparing}
	\vspace{-10mm}

\end{table*}

\section{Related work}
Dimensionality reduction helps avoid the curse of dimensionality and increases the interpretability of data and machine learning techniques. Different methods analyze the relationship among features, the class label, and the correlation among variables~\cite{MI} and get feature importance scores in order to allow for a more aware use of machine learning by non-experts. Those scores are often not aware of correlations among variables, thus leading to a necessary integration of a redundancy awareness concept~\cite{rar}. \par
In 2007 game theory found application in supervised feature selection~\cite{hullermeier, cohen} where the value function was defined as the accuracy or the generalization error of the trained model; to the best of our knowledge, the approaches proposed in the recent years are limited to labeled data. A recent paper~\cite{rozemberczki2022shapley} underlined how Shapley values spread through machine learning; in particular, they appear in several techniques to increase the overall interpretability of black-box models~\cite{shap,strumbelj} and new insights on Shapley values and their applications continue appearing in the literature~\cite{catav21a}. \par 
As a downside, Shapley values are well known to be computationally expensive. Several approximations found place in the literature, e.g., \cite{approx1,approx2,burgess_approximating_2021} among others; the first attempt of a comprehensive survey of Shapley values' approximations is represented by Rozemberczki et al.~\cite{rozemberczki2022shapley}. To reduce the computational run-time, we implement Castro et al.~\cite{approx2} approximation, i.e., the most common Shapley values' approximation non relying on additional assumptions on the players. \par 
As a parallel area of research, in recent years, unsupervised feature selection methods have raised strong interest in the community~\cite{ufs,li:algorithms}. We selected a representative sample within the vast number of unsupervised feature selection methods to compare the performance of our approach. Among them, UDFS~\cite{UDFS} creates pseudo-labels to perform the feature selection in unlabelled data; MCFS and NDFS~\cite{MCFS, NDFS} concentrate on keeping the clustering structure. LS~\cite{LS} selects features by their local preserving power. PFA~\cite{PFA} tries to eliminate the downside of PCA while keeping the information within the data. Most of these algorithms tend to select features as a by-product of retaining a clustering structure in the data. Finally, FSFC~\cite{FSFC} is meant to select only non-redundant variables using a new definition of distance in the $k$-nearest neighbors. Table~\ref{tab:comparing} illustrates a summary of the properties of the various methods analyzed in comparison with our paper. 

\section{Feature importance measures}
Consider a $N$-dimensional dataset containing $D$ instances. We interpret each dimension as the realization set of a random variable, refer to the set of variables as $\mathcal{F}=\{X_1,\ldots,X_N\}$ and to each dimension $X_i$ as $i$th feature or variable. Feature selection methods often internally assign to subsets of features an importance score and output the subset maximizing the mentioned score. We propose to rank features considering their average contribution to all the possible subsets of features. The higher the average contribution of a feature is, the more convenient it is to keep it within the selected features. Additionally, we will also introduce redundancy awareness in these scores. \par
Given a function that assigns a value to each subset of features, assessing the \emph{importance} of single features is not trivial as each feature belongs to $2^{N-1}$ subsets of features. In unsupervised contexts, we can assess the usefulness of a set of features measuring correlations or clustering properties. Throughout the manuscript, we stick to a value function that captures the maximal \emph{information} contained in the data. Following this choice, the approach presented is restricted to categorical tabular data. We compute feature importance scores and obtain a ranking prioritizing features highly correlated with the rest of the dataset.

\subsection{Feature importance score}
We obtain feature importance scores using coalitional game theory. Each game is fully represented by the set of players $\mathcal{F}$ and a set function $v$ that maps each subset $\mathcal{A}\subseteq\mathcal{F}$ to $v(\mathcal{A})\in \mathbbm{R}$. $v$ is referred to as \emph{value function}~\cite{shapley} and satisfies the following properties
\begin{enumerate}
 \item $v(\emptyset) = 0$,
 \item $v(\mathcal{A}) \geq 0$ for any $\mathcal{A}\subseteq\mathcal{F}$, and
 \item $v(\mathcal{A})\leq v(\mathcal{B})$ for any $\mathcal{A},\mathcal{B}\subseteq\mathcal{F}$ such that $\mathcal{A}\subseteq \mathcal{B}$.
\end{enumerate} 
Working with unlabelled data, we can not rely on ground truth labels. Hence, we define value functions relying on intrinsic properties of the dataset; we opt for a value function measuring the independence of the features in $\mathcal{A}\subseteq\mathcal{F}$. One possible initialization for $v$ is the \emph{total correlation} of $\mathcal{A}$.
\begin{definition}
	The \emph{total correlation} $C$ of a set of variables $\mathcal{A}\subseteq \mathcal{F}$ is defined as
	\begin{equation}
	C(\mathcal{A})=\sum_{X\in \mathcal{A}}H(X)-H(\mathcal{A}).
	\end{equation} $H(\mathcal{A})$ is the Shannon entropy of the subset of discrete random variables $\mathcal{A}$, i.e., 
	\begin{equation}
	 H(\mathcal{A})=-\sum_{\Vec{x}\in \mathcal{A}}p_\mathcal{A}(\Vec{x})\log p_\mathcal{A}(\Vec{x})
	\end{equation}
	where $p_\mathcal{A}(\cdot)$ is the joint probability mass function of $\mathcal{A}$. \\ $H(X)$ is the Shannon entropy of $X$, i.e., $H(X)=-\sum_{{x}\in X}p_X({x})\log p_X({x}).$
\end{definition}
We choose the total correlation as it satisfies properties (2) and (3), it has an intuitive meaning and can be easily extended such that it satisfies property (1).
\par
Shannon entropy~\cite{shannon} measures the uncertainty contained in a random variable $X$ considering how uniform data are distributed: its value is close to zero when its probability mass function $p_X$ is highly skewed while, as the distribution approaches a uniform distribution, its value increases. Moreover, the Shannon entropy is a monotone non-negative function and can be extended such that $H(\emptyset)=0$. We assume that all features in $\mathcal{F}$ are discrete as the extension of Shannon entropy to continuous variables is not monotone~\cite{diff_entropy}. As a consequence of Shannon entropy's properties, the total correlation $C(\mathcal{A})$ is close to zero if the variables in $\mathcal{A}$ are independent, and it increases when they are correlated. To study the impact of adding a feature $Y$ to $\mathcal{A}\subseteq\mathcal{F}$, we compute the value function of the incremented subset $v(\mathcal{A}\cup Y)$ and compare it with $v(\mathcal{A})$: The difference $v(\mathcal{A}\cup Y)-v(\mathcal{A}) = H(\mathcal{A})+H(Y)-H(\mathcal{A}\cup Y)$ is non-negative and measures how much $\mathcal{A}$ and $Y$ are correlated. We refer to $H(\mathcal{A})+H(Y)-H(\mathcal{A}\cup Y)$ as \emph{marginal contribution of $Y$ to $\mathcal{A}$}. If $\mathcal{A}$ and $Y$ are independent, then the marginal contribution of $Y$ to $\mathcal{A}$ equals zero. Vice versa, the marginal contribution grows the stronger the correlation between $Y$ and $\mathcal{A}$ is. As importance score, we assign to $X_i$ the average of its marginal contributions and we refer to it as $\phi(X_i)$, i.e., 
\begin{equation}\label{eqn:shapley}
 \phi(X_i)=\sum_{\mathcal{A}\subseteq\mathcal{F}\setminus X_i}\frac{1}{N\binom{N-1}{|\mathcal{A}|}}[H(\mathcal{A})+H(X_i)- H(\mathcal{A}\cup X_i)]
\end{equation} 
corresponding to the \emph{Shapley value} of the player $X_i$ in the game $(\mathcal{F}, v)$ when $v$ is the total correlation. The general definition of Shapley values reads~\cite{shapley}:
\begin{definition}
	Given a coalitional game $(\mathcal{F},v)$ and a player $X_i\in\mathcal{F}$, the \emph{Shapley value of $X_i$} is defined by
	\begin{equation*}
	\phi_v(X_i)= \sum_{\mathcal{A}\subseteq\mathcal{F}\setminus {X_i}}\frac{1}{N\binom{N-1}{|\mathcal{A}|}}\left[v(\mathcal{A}\cup X_i)-v(\mathcal{A})\right].
	\end{equation*}
\end{definition}
It can be proven that the Shapley value is the only function that satisfies the \emph{Pareto optimality}, i.e., $\sum_{X_i\in\mathcal{F}}\phi_v(X_i)=v(\mathcal{F})$, the dummy, the symmetry and additive properties~\cite{shapley}. Moreover, Shapley values represent a fair assignment of resources to players based on their contributions to the game. We use the scores $\phi(X_i)$ to rank the features in the dataset $\mathcal{F}$. However, Shapley values do not consider redundancies, and linearly dependent features obtain equal Shapley values. 

\subsection{Importance scores of low correlated features}
We use a dataset with three sets of correlated features (color-coded in Figure~\ref{fig:ranking_synth}(a)), and we aim to select features from subsets with different colors; however, as we have already underlined, correlated features are characterized by similar Shapley values. In particular, the three highest Shapley values are obtained by correlated features in the blue-colored set. Before addressing the problem of redundancy-awareness inclusion in Shapley values, we show that the Shapley values rank features that are not correlated with the rest of the dataset in low positions.
\begin{algorithm}[!t]
	\caption{SVFS}
	\begin{algorithmic}[1]
	 \label{alg:shap}
		\Procedure{svfs}{$\mathcal{F}$, $\epsilon$}\label{alg:2ranking}
		\State $\mathcal{S} = \emptyset$
		\While{$\mathcal{F}\neq\emptyset $}
		\While {$X\in\mathcal{F}$}
		\If {$H(X)+H(\mathcal{S})-H(\mathcal{S},X)>\epsilon$}
		\State$\mathcal{F}=\mathcal{F}\setminus X$
		\Else
		\State $\mathcal{F}=\mathcal{F}$
		\EndIf
		\State $\mathcal{S} = \mathcal{S} \cup\arg\max_{X\in\mathcal{F}} \{\phi(X)\}$
		\State $\mathcal{F}=\mathcal{F} \setminus \mathcal{S}$
		\EndWhile
		\EndWhile
		\EndProcedure
		\Return{$\mathcal{S}$}
	\end{algorithmic}
\end{algorithm}%

\begin{algorithm}[!t]
	\caption{SVFR}
	\begin{algorithmic}[1]
		\Procedure{svfr}{$\mathcal{F}$}
		\State $\mathcal{S}= \arg\max_{X\in\mathcal{F}} \{\phi(X)\}$
		\State ordered = $[\ ]$
		\State ordered$[0] = \arg\max_{X\in\mathcal{F}} \{\phi(X)\}$, \quad j = 1
		\State $\mathcal{F} = \mathcal{F}\setminus\mathcal{S}$
		\While{$\mathcal{F}\neq\emptyset\ \&\&\ j < N$}
		\For {$X\in\mathcal{F}$}
		\State $\text{rk}(X) = \phi(X)-H(X)-H(\mathcal{S})+H(\mathcal{S},X)$
		\EndFor
		\State ordered$[j]=$ 
		$\arg\max_{X\in\mathcal{F}}\{\text{rk}(X)\}$
		\State $\mathcal{S} = \mathcal{S}\cup\arg\max_{X\in\mathcal{F}}\{\text{rk}(X)\}$
		\State $\mathcal{F}=\mathcal{F}\ \setminus \mathcal{S}$
		\State$j ++$
		\EndWhile
		\EndProcedure
		\vspace*{0.1cm}
		\Return{ordered}
	\end{algorithmic}
\end{algorithm}%
\begin{theorem}
	\label{thm:null_shapley}
	Given a subset of features $\mathcal{B}\subset\mathcal{F}$ that satisfies the following properties
	\begin{enumerate}
 	\item for all $ X_j\notin \mathcal{B}$ and for all $ \mathcal{A}\subseteq\mathcal{F}\setminus\{X_j\}$, $H(\mathcal{A})+H(X_j)=H(\mathcal{A}\cup X_j)$
 	\item for all $ X_i\in \mathcal{B}$ and for all $ \mathcal{A}\subseteq \mathcal{F}\setminus\{X_i\}$, 
 	$H(\mathcal{A})+H(X_i) \geq H(X_i\cup\mathcal{A})$ 
	\end{enumerate}
 then $\phi(X_i)\geq\phi(X_j)$ for all $X_i\in \mathcal{B}$ and $ X_j\notin \mathcal{B}$.
\end{theorem}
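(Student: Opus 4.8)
The plan is to read off both Shapley values directly from the explicit formula in Equation~(\ref{eqn:shapley}) and to compare them term by term; the two hypotheses are precisely what is needed to control the sign of every marginal contribution $H(\mathcal{A})+H(X)-H(\mathcal{A}\cup X)$ occurring in that sum. So I would not invoke the abstract axioms of the Shapley value, but rather exploit the concrete shape of the total-correlation value function.

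First I would dispose of the features outside $\mathcal{B}$. Fix $X_j\notin\mathcal{B}$. Hypothesis~(1) states that $H(\mathcal{A})+H(X_j)-H(\mathcal{A}\cup X_j)=0$ for every $\mathcal{A}\subseteq\mathcal{F}\setminus\{X_j\}$, i.e. $X_j$ is a dummy player for the value function $v=C$; substituting into Equation~(\ref{eqn:shapley}) makes every summand vanish, so $\phi(X_j)=0$. Then I would treat the features inside $\mathcal{B}$: fix $X_i\in\mathcal{B}$ and use hypothesis~(2), which asserts that every marginal contribution $H(\mathcal{A})+H(X_i)-H(\mathcal{A}\cup X_i)$ is nonnegative. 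Because the combinatorial weights $\tfrac{1}{N\binom{N-1}{|\mathcal{A}|}}$ are all strictly positive, $\phi(X_i)$ is a nonnegative combination of nonnegative terms, hence $\phi(X_i)\geq 0$. Combining the two, $\phi(X_i)\geq 0=\phi(X_j)$ for all $X_i\in\mathcal{B}$ and $X_j\notin\mathcal{B}$, which is exactly the claimed inequality.

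I do not expect a real obstacle here: the statement is essentially a repackaging of the dummy-player behaviour of the Shapley value together with subadditivity of Shannon entropy. The only points that deserve a line of care are (i) that hypothesis~(1) provides an \emph{equality}, which pins $\phi(X_j)$ to exactly $0$ rather than merely bounding it, and (ii) that none of the Shapley weights vanishes, so that termwise nonnegativity genuinely transfers to the aggregated score $\phi(X_i)$. As a side remark worth including, hypothesis~(2) is in fact automatically satisfied for the entropy-based value function, since $H(\mathcal{A}\cup X_i)\leq H(\mathcal{A})+H(X_i)$ always holds; so the substantive assumption is really hypothesis~(1), which says that each $X_j\notin\mathcal{B}$ is statistically independent of every subset of the remaining features.
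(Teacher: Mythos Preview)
Your proposal is correct and follows essentially the same argument as the paper: show $\phi(X_j)=0$ from hypothesis~(1) by observing every marginal contribution vanishes, then show $\phi(X_i)\geq 0$ from nonnegativity of the marginal contributions and positivity of the Shapley weights, and conclude. The paper likewise remarks that hypothesis~(2) is automatic from subadditivity of Shannon entropy, so your side comment matches theirs as well.
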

\begin{proof}From (1) we know that, since the marginal contribution of $X_j\notin \mathcal{B}$ to any $\mathcal{A}\subseteq\mathcal{F}\setminus\{X_j\}$ is equal to zero, $\phi(X_j)=\sum_{\mathcal{A}\subseteq{\mathcal{F}\setminus\{X_j\}}}\frac{1}{N\binom{N-1}{|\mathcal{A}|}}\cdot 0=0.$ \par 
For any $X_i\in\mathcal{F}$ and $\mathcal{A}\subseteq\mathcal{F}$, we know that $H(\mathcal{A}\cup X_i)\leq H(\mathcal{A})+H(X_i)$ from Shannon entropy's properties~\cite{shannon}. Hence, all marginal contributions are non-negative. Hence, $\phi(X_i)\geq0= \phi(X_j)$ for all $X_i\in \mathcal{B}$ and $X_j\notin \mathcal{B}$. \par 
This concludes the proof.

\end{proof}
\vspace{0.1cm}

Thus with total correlation as value function, Shapley values are non-negative and equal zero if and only if the feature is non-correlated with any subset of features. Moreover, features highly correlated with other subsets of features get high Shapley values.\par
\section{Redundancy removal}
We address the challenge of adding redundancy awareness to Shapley values. For this purpose, we develop a pruning criteria based on the total correlation and greedily rank features to get a redundancy-free ranking of features while still looking for features with high Shapley values. Feature selection based on this ranking selects the variables ranked first by Shapley values which show little dependencies. \par
We propose two algorithms. The Shapley Value Feature Selection (SVFS) needs a parameter $\epsilon$ representing the correlation among features that we are willing to accept; hence, SVFS requires some expert knowledge on the dataset to specify the parameter $\epsilon$ in an opportune interval. The Shapley Value Feature Ranking (SVFR) works automatically with an included notion of redundancy. We show that the two algorithms lead to consistent results in Section~\ref{sec:consistency}. At each step, both algorithms select the highest-ranked feature among the ones left. \par
We use a total correlation-based punishment; In particular, $H(\mathcal{A})+H(X)-H(\mathcal{A}\cup X) \geq 0$ represents the strength of the correlation among $X$ and $\mathcal{A}$ and it is equal to zero if and only if $X$ and $\mathcal{A}$ are independent. \par
SVFS's inputs are the set of unordered features $\mathcal{F}$ and the parameter $\epsilon >0$; $\epsilon$ plays the role of a stopping criterion and represents the maximum correlation that we are willing to accept within the set of selected features. Whenever $\epsilon$ is high, we end up with the ordering given by Shapley values alone; instead, for $\epsilon\approx0$ the criterion can lead to the selection of the only features which are uncorrelated with the first one. The optimal range of $\epsilon$ highly depends on the dataset. We show that SVFS is robust w.r.t. the choice of $\epsilon$. At each iteration, SVFS excludes from the ranking the features $X$s that are correlated with the already ranked features $\mathcal{S}\subseteq \mathcal{F}$ more than $\epsilon$, i.e., $H(X)+H(\mathcal{S})-H(\mathcal{S},X)>\epsilon$, computes the Shapley values of all remaining features $X$ and adds to $\mathcal{S}$ the feature whose Shapley value is the highest. When there are no features left, it stops and returns $\mathcal{S}$. 
\par
SVFR takes as an input $\mathcal{F}$ and outputs a feature ranking without the need of any additional parameter. The ranking is aware of correlations as each of the Shapley values $\phi(X_i)$ is penalized using the correlation measure $H(X_i)+H(\mathcal{S})-H(X_i \cup \mathcal{S})$ where $\mathcal{S}$ is the set of already ranked features, and $X_i$ is a new feature to be ranked. This algorithm provides a complete ranking of features and can be prematurely stopped including an upper bound of features we are willing to rank. The absence of the additional parameter $\epsilon$ is the main advantage of SVFR over SVFS. 

\section{Scalable algorithms}\label{sec:approx}
The size of $\mathcal{P}(\mathcal{F})$ being exponential in $N$, computing Shapley values involves $2^N$ evaluations of the value function. We use approximated Shapley values to obtain scalable versions of SVFR and SVFS. We implement three versions of the algorithms that differ only in the computations of Shapley values used:
\begin{itemize}
    \item \emph{full algorithm}: it uses the full computation of the Shapley values
    \item \emph{bounded algorithm}: consider only subsets up to size $k$ fixed to compute the Shapley values
    \item \emph{sampled algorithm}: it uses the approximation proposed by Castro et al.~\cite{approx2} based on $n$ random sampled subsets of features.
\end{itemize} 
The time complexity for the sampled algorithm is $\mathcal{O}(D\cdot n)$, for the bounded algorithm is $\mathcal{O}(D\cdot N^{k})$ while for the full algorithm is $\mathcal{O}(D\cdot2^N)$ where $N$ is the number of features and $D$ the number of samples in the dataset.
\section{Experiments}\label{sec:experiments}
We show that our feature ranking method outperforms competing representative feature selection methods in terms of redundancy reduction. Metrics such as NMI, ACC, and {redundancy rate} are often used in the previous literature to evaluate unsupervised feature selection methods. NMI and ACC focus on the cluster structure in the data; therefore, as clustering is not the goal of our approach, we compare it with the competing methods using the redundancy rate. The redundancy rate of $\mathcal{S}\subseteq\mathcal{F}$ is defined in terms of pairwise Pearson correlations, i.e.,
\begin{equation}\label{eq:redundancy}
 \text{Red}(\mathcal{S})=\frac{1}{2m(m-1)}\sum_{X,Y\in \mathcal{S}, X\neq Y} \rho_{X,Y}
\end{equation}
where $\rho_{X,Y}\in[0,1]$ is the Pearson correlation of features $X$ and $Y$. It represents the averaged correlation among the pairs of features in $\mathcal{S}$ and varies in the interval $[0,1]$: a $\text{Red}(\mathcal{S})$ close to $1$ shows that many selected features in $\mathcal{S}$ are strongly correlated while a value close to zero indicates that $\mathcal{S}$ contains little redundancy. In the experiments, we use the \emph{redundancy rate} as evaluation criteria re-scaling it to the interval $[0,100]$ via the maximum pair-wise correlation to facilitate the comparison among different datasets.

\subsection{Datasets and competing methods}
We show a comparison against \emph{SPEC}, \emph{MCFS}, \emph{UDFS}, \emph{NDFS}, \emph{PFA}, \emph{LS} and FSFC~\cite{SPEC,MCFS,UDFS,NDFS,PFA,LS,FSFC}.\par 
We use various synthetic and publicly available datasets: the \href{https://archive.ics.uci.edu/ml/datasets/breast+cancer}{\underline{\emph{Breast Cancer dataset}}}, the \href{https://www.kaggle.com/tunguz/big-five-personality-test}{\underline{\smash{\emph{Big Five Personalities Test dataset}}}}\footnote{The first $50$ features in the Big Five dataset are the categorical answers to the personality test's questions and are divided into $5$ personalities' traits ($10$ questions for each personality trait). To apply the full algorithm, we select questions from different personalities and restrict to $10000$ instances.} and the \href{https://www.kaggle.com/datasets/stefanoleone992/fifa-21-complete-player-dataset?select=players_20.csv}{\underline{\emph{FIFA dataset}}}\footnote{We restrict to the $5000$ highest-rated players by the overall attribute.}. The datasets that we use throughout the paper are all categorical or discrete. We consider subsets of the full dataset in order to apply the full versions of the algorithms and investigate the performance of the approximations of SVFR and SVFS at the end of the section.

\subsection{Redundancy awareness}
We compare the feature selection results of our algorithm against the competitors by evaluating the redundancy rate in Table~\ref{tab:redundancy}. For the FIFA dataset, we select $15$ features from the entire data which characterize the \emph{agility}, \emph{attacking} and \emph{defending} skills of the football players; we keep the whole datasets for Breast Cancer and synthetic data; in the case of the Big Five Personality Traits dataset, we select respectively $5$ questions from three different personality traits for the balanced dataset and $9$ features from one trait and $3$ from other two personality traits in the case of the unbalanced dataset. In order to avoid bias towards the random selection of personality traits and features in the Big Five data, we average the redundancy rate over $30$ trials on randomly selected personalities and variables both in the case of the balanced and unbalanced setup.\par 
In each column, bold characters highlight the lowest redundancy rate. We use SVFR for ranking the features and select the three highest-ranked features. We consequently specified the parameters of the competing methods in order to get a selection of features as close to three features as possible. For FCFS we set $k=4$ for BC dataset, $k = 8$ for FIFA dataset, $k = 8$ for the synthetic data and for \emph{Big Five} dataset we use different $k$ at each re-run such that the number of selected variables varies between $2$ and $5$ and then we average the redundancy rates; for NDFS, MCFS, UDFS and LS we used $k=5$ ($k$ being the number of clusters in the data); for the other competitors, we specify the number of features to be selected. Table~\ref{tab:redundancy} illustrates that SVFR outperforms the competing methods in nearly all the cases. In particular, while SVFR achieves low redundancy rates in all datasets, the competing algorithms show big differences in performance in the various datasets. On the Breast Cancer data and the synthetic dataset respectively, PFA and NDFS slightly outperform SVFR. However, they do not keep an average low redundancy rate on the other datasets.  For reproducibility, we make the code publicly available \footnote{\url{https://github.com/chiarabales/unsupervised_sv}}.

\begin{table*}[!t]
\centering
\begin{tabularx}{\textwidth} { >{\raggedleft\arraybackslash}X | >{\raggedleft\arraybackslash}X >{\raggedleft\arraybackslash}X >{\raggedleft\arraybackslash}X >{\raggedleft\arraybackslash}X >{\raggedleft\arraybackslash}X }

& Breast Cancer & 
B5\_balanced & 
B5\_unbalanced & 
FIFA & 
Synthetic 
\vspace*{0.1mm} \\ \hline

NDFS & 36.30 & 
22.11 & 
20.75 & 
18.97 & 
\textbf{1.49}
\\

MCFS & 20.26 & 
23.59 & 
18.79 &
20.63 & 
3.74
\\

UDFS & 33.59
& 28.13 
& 35.18 
& 57.73 
& 4.06
\\

SPEC 
& 13.89 
& 39.09 
& 21.46 
& 42.14 
& 29.4 
\\

LS 
& 7.05 
& 28.83 
& 58.25 
& 48.28 
& 100.00 
\\

PFA 
& \textbf{5.10} 
& 23.22 
& 34.28 
& 57.42 
& 35.84 
\\

FSFC 
&8.74 & 
22.64 
& 20.99 
& 36.45 
& 2.12 
\\

\rowcolor{colorlight} {SVFR} 
& {6.68} 
& {\textbf{15.65}} 
& {\textbf{18.02}} 
& {\textbf{14.79}} 
& {{1.51}} 
\end{tabularx}	
 \vspace*{3mm}
 \caption{\label{tab:redundancy}Redundancy rate of the sets of three selected features using the competing algorithms and SVFR (highlighted in green color in the table) on different datasets. The lowest rates are represented in bold characters.}
	\vspace{-5mm}
\end{table*}

\begin{figure}[!t]
\subcaptionbox{\label{fig:ranking_BC}}%
 [0.6\linewidth]{\includegraphics[height=2.5cm]{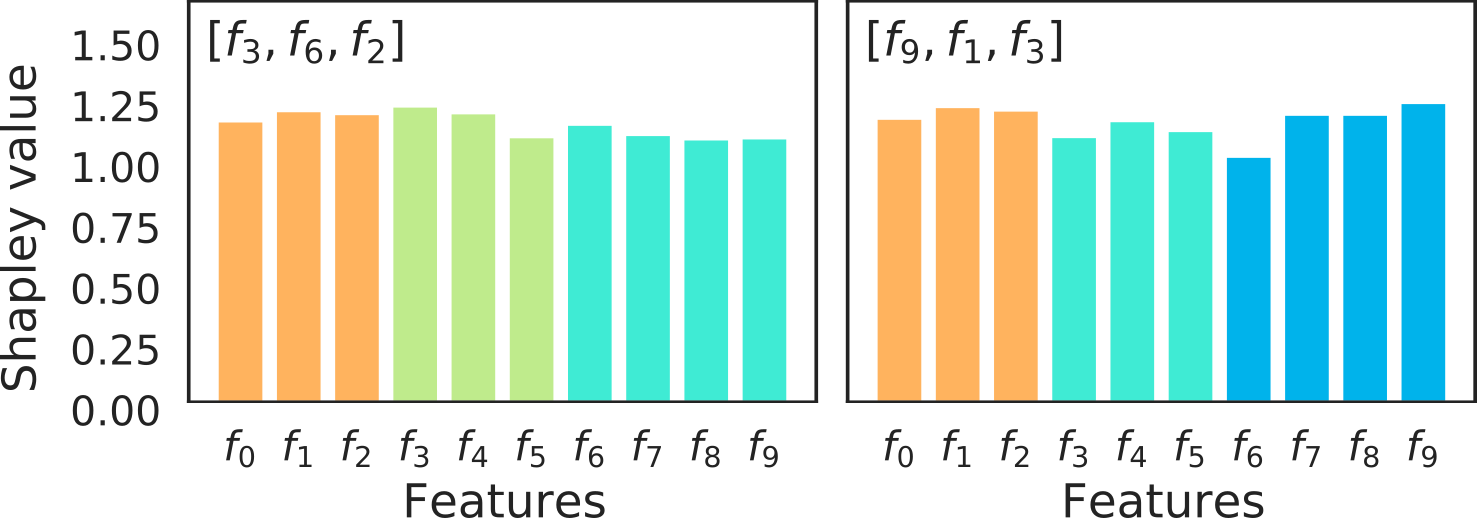}}
\subcaptionbox{\label{fig:shap_big5}}
 [0.37\linewidth]{\includegraphics[height=3.1cm]{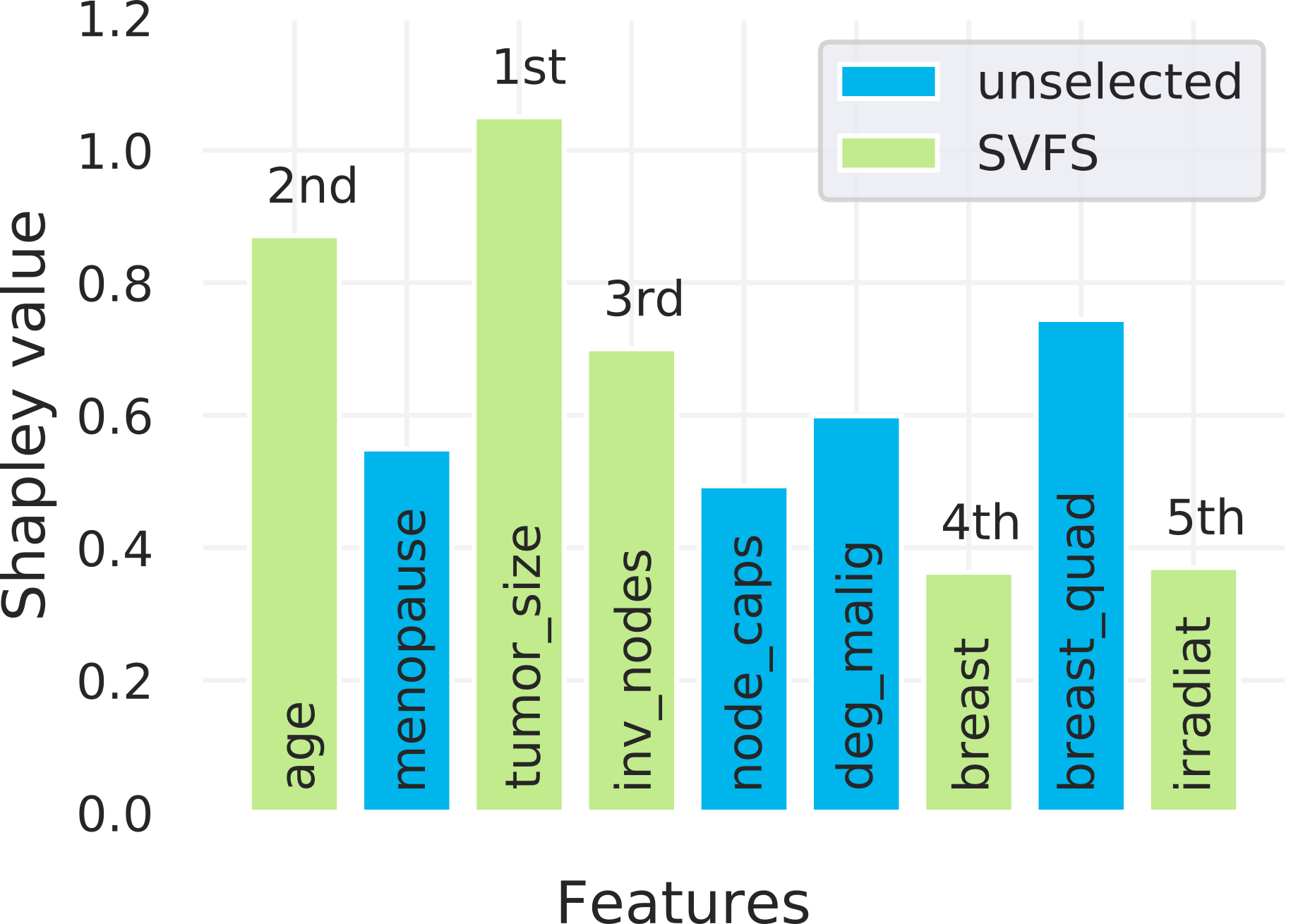}}
 \vspace*{-3mm}
 \caption{\label{fig:shap_big5_and_ranking_BC}(a) Barplot of Shapley values and respective feature selections by SVFS ($\epsilon=0.3$) in Big Five dataset restricted to $10$ features. Different personalities traits are color-coded in each plot. (b) Barplot of Shapley values for Breast cancer data; in green, the ordering of features' selection by SVFS when $\epsilon=0.5$.}
 \vspace{-2mm}
\end{figure}
\subsection{Relevance of unsupervised feature selection and effectiveness}
In Figure~\ref{fig:shap_big5_and_ranking_BC}(a), each plot corresponds to a different subset of features of the Big Five dataset, i.e., $10$ features selected from three different personality traits. Running SVFS with $\epsilon =0.3$ we detect correlated features and avoids selecting them together as shown in the plots. Using the scaled versions of our algorithms from Section~\ref{sec:approx} we can extend the approach towards the complete Big Five dataset. \par 
Figure~\ref{fig:ranking_synth}(a) represents the Shapley values of features in a $12$ dimensional synthetic dataset where subsets of correlated features are color-coded. We measure the ability of the algorithm in selecting features from different subsets of correlated features; SVFS selects one feature from each subset of correlated features. In particular, when $\epsilon=1$, SVFS achieves this goal by selecting $\{f_8, f_{7},f_{3}\}$ while the ranking given by the Shapley values alone is $\{f_8, f_{10},f_{11}\}$ which belong to the same subset of correlated features. This nicely underlines the inability of Shapley values to detect correlations and the necessity of integrating correlation-awareness to perform a feature selection.

\begin{table*}[!t]
	\begin{tabularx}{\textwidth} { >{\raggedright\arraybackslash}X | >{\raggedright\arraybackslash}X >{\raggedright\arraybackslash}X >{\raggedright\arraybackslash}X}
		
		& Big Five & Synthetic Data & Breast Cancer\\\hline 
        $\epsilon = 0.2$ & $[\textcolor{colorA}{11}, \textcolor{colorE}{0}, \textcolor{yellow}{5}]$ & $[8, \textcolor{colorH}{7}, \textcolor{colorE}{0}]$ & $[\textcolor{colorD}{2}, \textcolor{colorE}{0}, 8]$ \\
        $\epsilon = 0.3$ & $[\textcolor{colorA}{11}, \textcolor{colorE}{0}, \textcolor{colorG}{10}]$ & $[8, \textcolor{colorH}{7}, \textcolor{yellow}{2}]$ & $[\textcolor{colorD}{2}, \textcolor{colorE}{0}, \textcolor{colorF}{4}, \textcolor{colorG}{6}]$ \\
        $\epsilon = 0.4$ & $[\textcolor{colorA}{11}, \textcolor{colorE}{0}, \textcolor{colorH}{14}]$ & $[8, \textcolor{colorH}{7}, \textcolor{colorA}{3}]$ & $[\textcolor{colorD}{2}, \textcolor{colorE}{0}, \textcolor{colorF}{4}, 8, \textcolor{colorG}{6}]$ \\
        $\epsilon = 0.5$ & $[\textcolor{colorA}{11}, \textcolor{colorE}{0}, \textcolor{colorH}{14}, \textcolor{colorB}{9}]$ & $[8, \textcolor{colorH}{7}, \textcolor{colorA}{3}]$ & $[\textcolor{colorD}{2}, \textcolor{colorE}{0}, \textcolor{colorA}{3}, 8, \textcolor{colorG}{6}]$ \\
        $\epsilon = 0.6$ & $[\textcolor{colorA}{11}, \textcolor{colorE}{0}, \textcolor{colorH}{14}, \textcolor{yellow}{5}]$ & $[8, \textcolor{colorH}{7}, \textcolor{colorA}{3}]$ & $[\textcolor{colorD}{2}, \textcolor{colorE}{0}, \textcolor{colorA}{3}, 8, \textcolor{colorG}{6}]$ \\
        $\epsilon = 0.7$ & $[\textcolor{colorA}{11}, \textcolor{colorE}{0}, \textcolor{colorH}{14}, 13]$ &$[8, \textcolor{colorH}{7}, \textcolor{colorA}{3}]$ & $[\textcolor{colorD}{2}, \textcolor{colorE}{0}, \textcolor{colorA}{3}, \textcolor{colorF}{4}, 8, \textcolor{colorG}{6}]$ \\
        $\epsilon = 0.8$ & $[\textcolor{colorA}{11}, \textcolor{colorE}{0}, \textcolor{colorH}{14}, 13]$ & $[8, \textcolor{colorH}{7}, \textcolor{colorA}{3}, \textcolor{colorE}{0}]$ & $[\textcolor{colorD}{2}, \textcolor{colorE}{0}, \textcolor{colorA}{3}, \textcolor{colorF}{4}, \textcolor{yellow}{5}, 8, \textcolor{colorG}{6}]$ \\
        SVFR & $[\textcolor{colorA}{11}, \textcolor{colorE}{0}, \textcolor{yellow}{5}, \textcolor{colorG}{10}, 12, 8, \textcolor{colorG}{6}, \textcolor{colorD}{2}]$ & $[8, \textcolor{colorH}{7}, \textcolor{colorA}{3}, \textcolor{colorE}{0}, \textcolor{colorG}{6}, {5}, \textcolor{yellow}{2}, \textcolor{colorB}{10}]$ & $[\textcolor{colorD}{2}, \textcolor{colorE}{0}, \textcolor{colorF}{4}, \textcolor{colorG}{6}, 8, \textcolor{yellow}{5}, \textcolor{colorB}{1}, \textcolor{colorA}{3}]$
	\end{tabularx}
	\vspace*{2mm}
	\caption{\label{tab:new_algorithm}Orderings of selection given by SVFS for various $\epsilon$ and first $8$ ranked features by SVFR. Features are color-coded in order to simplify the visualization. 
	}
	\vspace{-7mm}
\end{table*}
Our unsupervised feature selection allow to construct more efficient psychological tests avoiding redundancies and reducing the number of questions that need to be answered without losing too much information.
\begin{table}[!t]
    \centering
    \setlength\extrarowheight{1pt}
    \begin{tabularx}{\linewidth} { >{\raggedright\arraybackslash}X | >{\raggedright\arraybackslash}X >{\raggedright\arraybackslash}X >{\raggedright\arraybackslash}X >{\raggedright\arraybackslash}X}
    & & $k = 1$ & $k = 3$ & $k = 5$ \vspace*{0.1mm} \\ 
    \hline 
    \multirow{3}{*}{\begin{sideways} BIG5 \end{sideways}}& random & 0.04 & 0.19 & 0.33 \\
    & sampled & 0.04 & 0.37 & 0.49\\
    & bounded & \textbf{0.08} & \textbf{0.56} & \textbf{0.55}\vspace*{0.1mm} \\
    \hline\multirow{3}{*}{\begin{sideways} FIFA \end{sideways}} & random & 0.06 & 0.24 & 0.35\\
    & sampled & 0.00 & 0.33 & 0.40 \\
    & bounded & \textbf{1.00} & \textbf{0.67} & \textbf{0.80}
    \end{tabularx}
    \vspace*{1mm}
    \caption{$recall@k$ for $k\in \{ 1, 3, 5\}$ comparing a random ranking and the rankings given by SVFR using the sampled and bounded algorithms to the full SVFR ranking. We show results for FIFA and Big Five datasets restricted to $15$ features randomly chosen. Bold text highlights the best approximation.}
    \label{tab:firstk}
    \vspace{-0.4cm}
\end{table}

\subsection{Interpretation of feature ranking}
We apply SVFS when $\epsilon=0.5$ to the Breast Cancer dataset. In Figure~\ref{fig:shap_big5_and_ranking_BC}(b), the resulting Shapley values and the ordering of selected features are displayed. The selection resulting from SVFS shows a low redundancy rate while the selected features (e.g., the size of the tumour, age, and the number of involved lymph nodes) are clearly in line with domain knowledge on risk factors for disease progression (label). Furthermore, the comparison with the ranking without redundancy awareness nicely highlights the importance of our approach to avoid redundancies when possible. 

\subsection{Comparison among the proposed algorithms}\label{sec:consistency}
In Figure~\ref{fig:change_eps_graph}, we plot a comparison among SVFS and SVFR w.r.t. the redundancy rate on three datasets with different values of $\epsilon$. As benchmarks, we use for SVFR the selection of $3$, $4$ and $5$ features respectively while for SVFS, $\epsilon$ varies in the interval $[0,1.4]$ with steps of size $0.1$. \par
Using the number of features as a stopping criterion in SVFR would produce consistent results to SVFS: as an example, using the breast cancer data the ranking given by SVFR, i.e., $[{2}, {0}, {4}, {6}, 8, {5}, {1}, {3}]$, is consistent with the selection given by SVFS respectively using $\epsilon=0.2$ and $\epsilon = 0.6$, i.e., $[{2}, {0}, 8]$ and $[{2}, {0}, {3}, 8, {6}]$. Table~\ref{tab:new_algorithm} shows a full comparison among the SVFR and SVFS on three different representative datasets. We recommend applying SVFS when no previous knowledge of the data is available and it is hard to establish an optimal range for $\epsilon$. Vice versa, one could apply SVFR when the expertise in the dataset domain allows determining a reasonable number of features as stopping criterion or the observation of the ranking given can provide insights to the non-expert on which features to keep and which can be discarded for further analysis.


\begin{figure*}[!t]
	\centering
	\includegraphics[width=\linewidth]{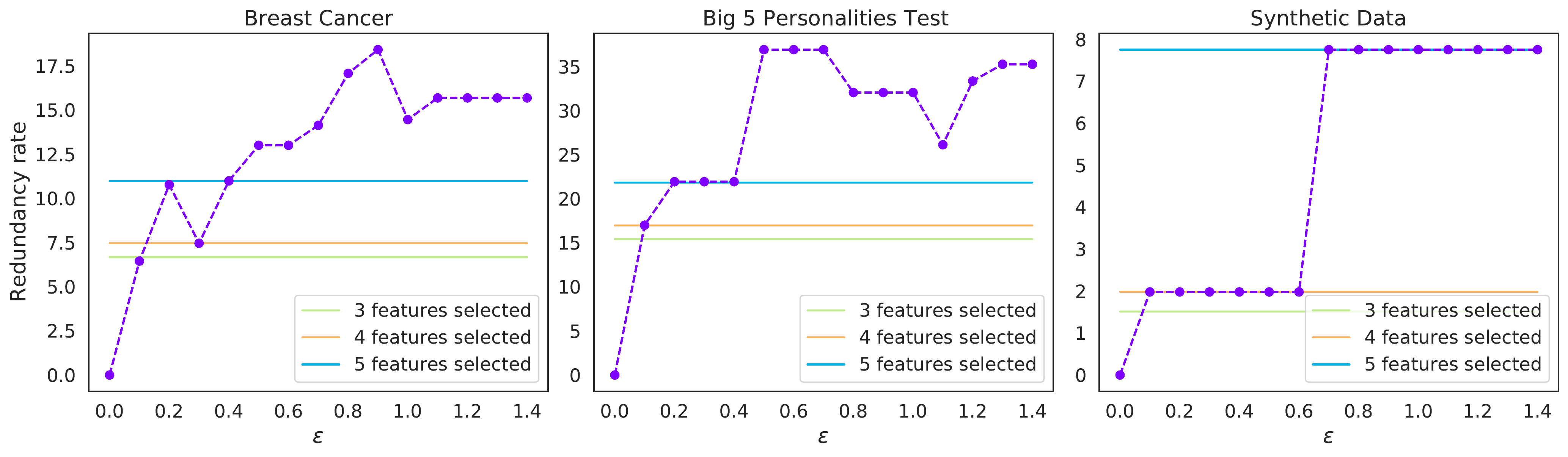}
	\caption{Redundancy rates of the selected features' sets as a function of $\epsilon$ for SVFS (bullets points connected by the dashed line) and for $3,4,$ and $5$ selected features when using SVFR.}
	\label{fig:change_eps_graph}
	\vspace{-5mm}
\end{figure*}

\subsection{Run-time analysis}\label{sec:runtimeanalysis}
As a consequence of the full computation of Shapley values, the run-time of SVFR and SVFS increases exponentially with the number of features as shown by Figure~\ref{fig:runtime}. Using the approximated algorithms, this growth turns out to be slower. In particular, when using the sampled algorithm, the run-time increases only linearly with the number of features while the growth of the bounded algorithm's run-time is polynomial in the number of features. In the additional material, we show the log-log plot of the run-time for increased number of samples in the dataset. For each algorithm, we use random subsets of the Big Five dataset and average over $10$ trails. \par
We further compare the rankings of the approximated and full algorithms using the $recall@k$ metric interpreting rankings of the full version of SVFR as ground truth. We use the Big Five dataset, randomly selecting $5$ questions from $3$ different personalities and average the scores over $100$ trails (see Table~\ref{tab:firstk}). Overall, the results for the approximated algorithms clearly outperform random ordering - but still deviate often from the full versions. It is worth to note that the bounded algorithm using subsets up to size $5$ performs better than the sampled version. 

\begin{figure}[!t]
 \centering\includegraphics[width=0.90\textwidth]{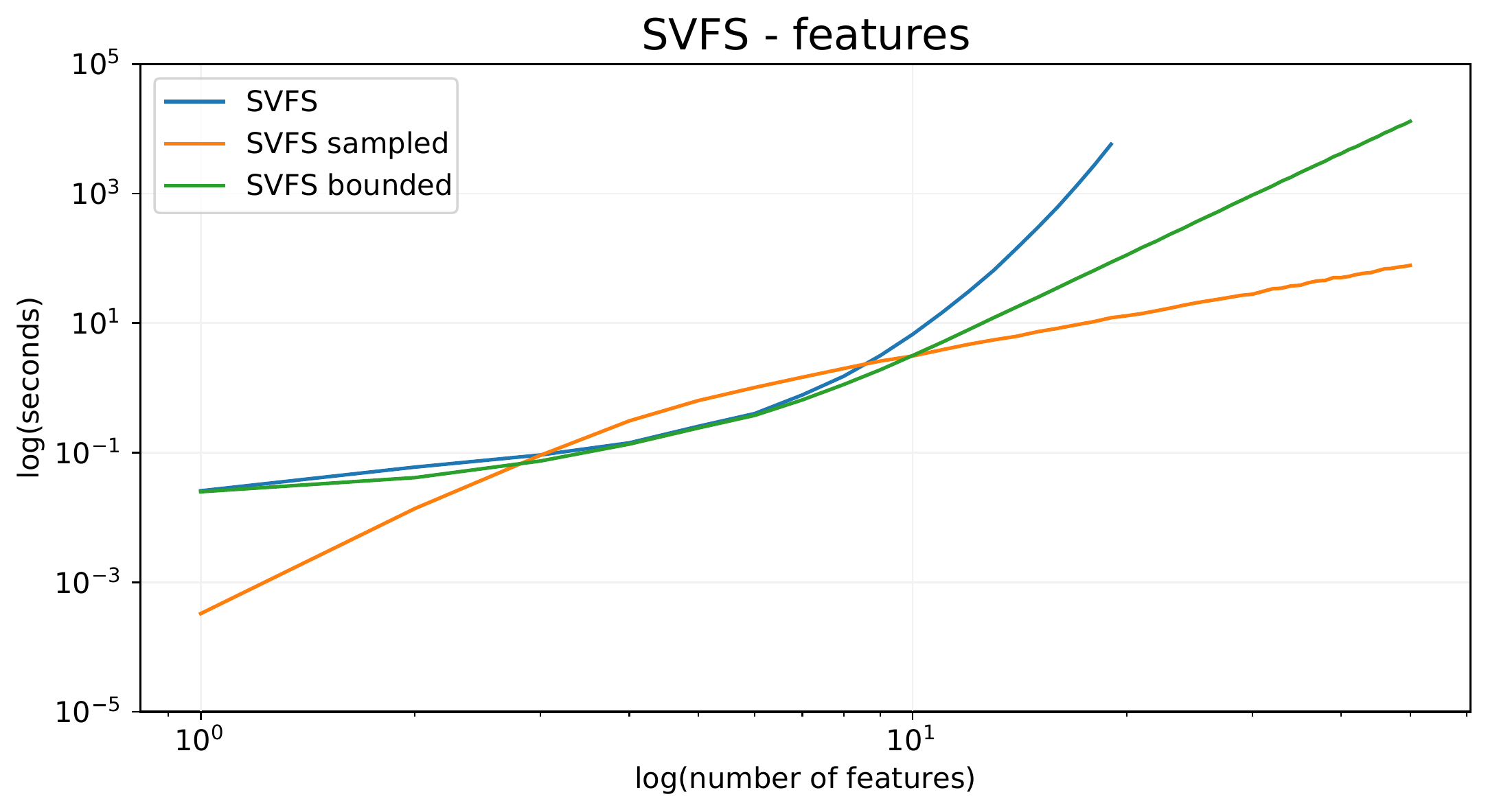}
 \vspace*{-2.5mm}
 \caption{\label{fig:runtime} Log-log plots of the run-time as a function of the number of features for the approximated and full SVFS ($\epsilon=0.5$, $D = 1000$). The full SVFS is stopped with $20$ features.}
 \vspace{-6mm}
\end{figure}


\section{Conclusions}
In the paper, we develop a new method to assess feature importance scores in unsupervised learning, bridging the gap between unsupervised feature selection and cooperative game theory. We integrate Shapley values with redundancy awareness making use of an entropy-based function to get feature importance scores. \par 
We present two algorithms: SVFS implements feature selection using a redundancy aware criterion while SVFR assigns a ranking to each feature while being aware of correlations with previously ranked features. We show how the results of the two algorithms are consistent and state-of-the-art regarding their application. Our feature selection methods outperform previously proposed algorithms w.r.t. the redundancy rate. We additionally introduce approximated versions of the algorithms that are scalable to higher dimensions.

\newpage

\bibliography{arxiv_biblio} 
\bibliographystyle{siam}
\section*{Additional material}
\begin{figure}
 \centering
 \includegraphics[width=0.9\textwidth]{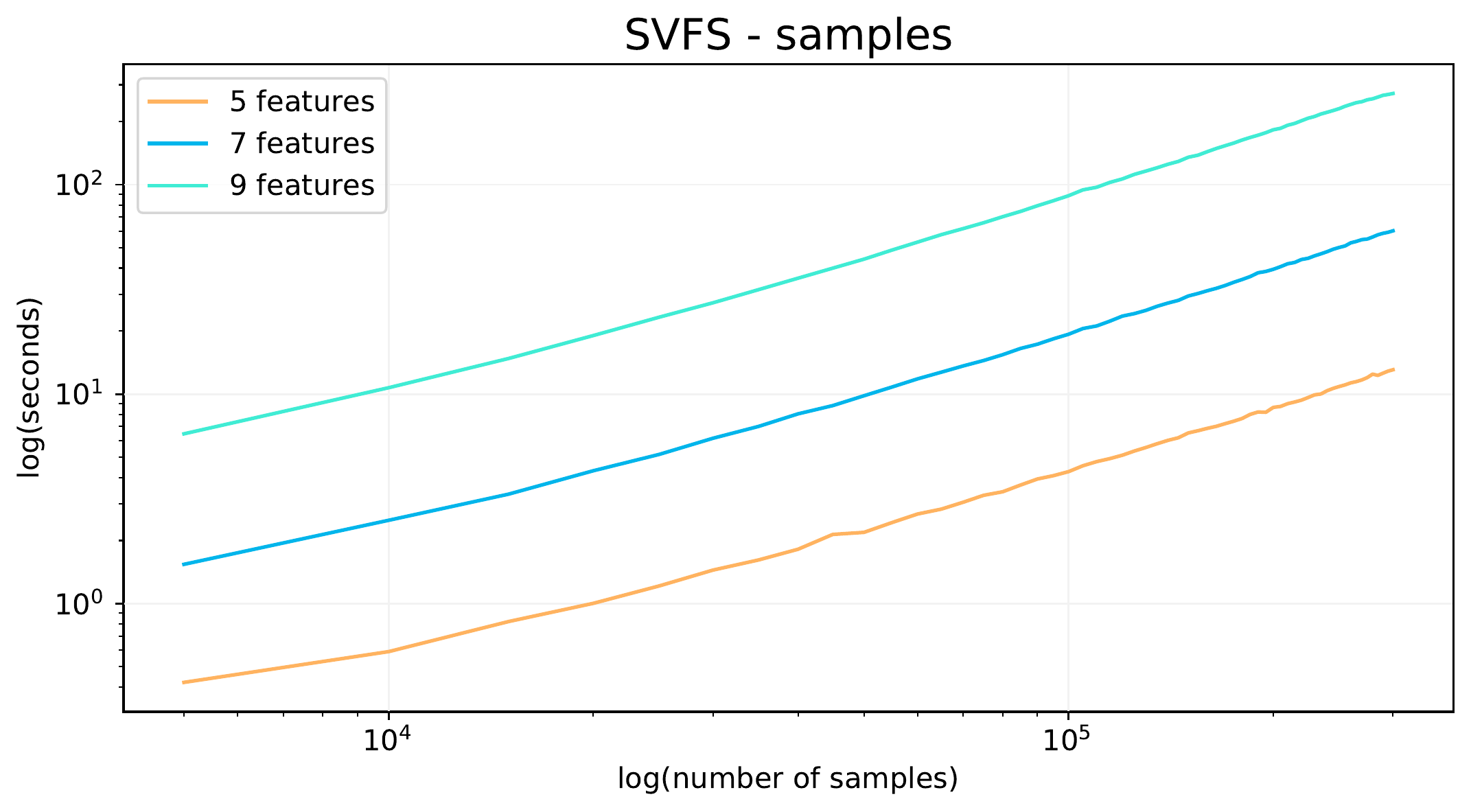}
 \caption{\label{fig:runtimesamples}Log-log plot of the run-time for the full SVFS with $\epsilon=0.5$ as a function of the number of the samples $D$ and fixed number of features.}
 \vspace{-1cm}
\end{figure}

\begin{table*}[!h]
 \centering
 \setlength\extrarowheight{1pt}
 \begin{tabularx}{\linewidth} { X|  X X  }
 & features & samples  \\ \hline
Breast Cancer dataset & 9 & 286 \\
Big Five dataset & 50 & 1013558 \\
FIFA20 dataset &  46 & 15257 \\
synthetic dataset & 12 & 10000
 \end{tabularx}
 \vspace*{1mm}
 \caption{Summary of the datasets' structures.}
 \vspace{-0.4cm}
\end{table*}

\end{document}